\newtcolorbox{conclusionbox}{
  colback=blue!5,        %
  colframe=blue!75!black, %
  coltitle=black,        %
  fonttitle=\bfseries,   %
  boxrule=1pt,           %
  arc=1mm,               %
  left=2mm,              %
  right=2mm,             %
  top=1mm,               %
  bottom=1mm,            %
}
\title{Next Token Perception Score: Analytical Assessment of your LLM Perception Skills}
\author{%
  Yu-Ang Cheng\thanks{These authors contributed equally to this work.} \\
  Department of Cognitive \& Psychological Sciences\\
  Brown University\\
  \texttt{yuang\_cheng@brown.edu} \\
  \And
  Leyang Hu\footnotemark[1] \\
  Department of Computer Science \\
  Brown University \\
  \texttt{leyang\_hu@brown.edu} \\
  \And
  Hai Huang \\
  Atlassian \\
  \texttt{hhuang3@atlassian.com} \\
  \And
  Randall Balestriero \\
  Department of Computer Science \\
  Brown University \\
  \texttt{randall\_balestriero@brown.edu} \\
}
\begin{document}

\maketitle

\begin{abstract}
    Autoregressive pretraining has become the de facto paradigm for learning general-purpose representations in large language models (LLMs). However, linear probe performance across downstream perception tasks shows substantial variability, suggesting that features optimized for next-token prediction do not consistently transfer well to downstream perception tasks. We demonstrate that representations learned via autoregression capture features that may lie outside the subspaces most informative for perception. To quantify the (mis)alignment between autoregressive pretraining and downstream perception, we introduce the Next Token Perception Score (NTPS)—a score derived under a linear setting that measures the overlap between autoregressive and perception feature subspaces. This metric can be easily computed in closed form from pretrained representations and labeled data, and is proven to both upper- and lower-bound the excess loss. Empirically, we show that NTPS correlates strongly with linear probe accuracy across 12 diverse NLP datasets and eight pretrained models ranging from 270M to 8B parameters, confirming its utility as a measure of alignment. Furthermore, we show that NTPS increases following low-rank adaptation (LoRA) fine-tuning, especially in large models, suggesting that LoRA aligning representations to perception tasks enhances subspace overlap and thus improves downstream performance. More importantly, we find that NTPS reliably predicts the additional accuracy gains attained by LoRA finetuning thereby providing a lightweight prescreening tool for LoRA adaptation. Our results offer both theoretical insights and practical tools for analytically assessing LLM perception skills.
\end{abstract}

\section{Introduction}
\label{sec:introduction}
\begin{SCfigure}[][h]
    \centering
    \includegraphics[scale=0.75]{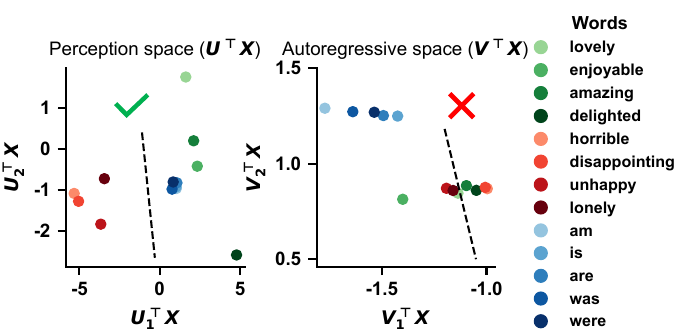}
      \caption{2D projections of the representations learned through perception \(U^\top X\) (left) or autoregressive \(V^\top X\) (right) tasks based on OpenELM-450M and Emotion\citep{emotion} dataset. \textbf{In the perception space, joy and sad words are clearly separated, illustrating label-relevant semantic encoding; in the autoregressive space, these two emotion classes overlap, but syntactic categories become clearly clustered.}}    
    \label{fig:uv-projections}
\end{SCfigure}
The success of GPT-1 \citep{gpt1} demonstrated the effectiveness of autoregressive pretraining—training a model to predict the next token given preceding context—for learning transferable language representations. This autoregressive training paradigm quickly became the standard for building large language models (LLMs), leading to increasingly capable successors such as GPT-2/3/4 \citep{gpt2, gpt3, gpt4} and LLaMA-1/2/3/4 \citep{llama1, llama2, llama3, llama4}. As model capabilities scaled, the expectation evolved: rather than merely serving as a source of transferable features requiring task-specific fine-tuning, a foundation model is now expected to perform well on downstream perception tasks straight out of the box, without modifying its pretrained weights.

Under this expectation, prompting has emerged as a popular strategy \citep{gpt2}—adapting the model to downstream perception tasks solely by manually crafting input text. Meanwhile, linear probing has gained traction as an alternative \citep{kumar2022fine, liu2023same}, leveraging frozen representations and training only a lightweight linear classifier on top, offering a more structured and efficient approach to downstream adaptation. While prompting and linear probing may appear quite different in practice, they are fundamentally two sides of the same coin. In prompting, natural language input steers the model, and the model’s pretrained linear head for autoregression maps hidden states to output token probabilities. In linear probing, by contrast, the model is frozen and an external linear head is trained to interpret its hidden representations.

However, as we show in \Cref{sec:alignment_score_computation}, linear probing does not perform equally well across all downstream perception tasks. In some cases, training a model from scratch on the downstream dataset yields significantly better performance. This suggests that representations learned by current LLMs are not universally effective: while some downstream tasks benefit greatly from pretraining and are well-served by simple adaptation methods like linear probing, others are not. Motivated by this observation, we pose the following question:

\textbf{\emph{How can we quantify the alignment between autoregressive pretraining and downstream perception tasks to explain the varying effectiveness of linear probing across different datasets?}}

In this paper, we take a first step towards understanding and assessing the alignment between autoregressive pretraining and downstream perception tasks by:

\begin{itemize}
\item \textbf{Systematically evaluating the benefits of autoregressive pretraining}—by comparing linear‐probe performance on six pretrained models against identical architectures trained from scratch across 12 downstream perception datasets.

\item \textbf{Proposing the Next Token Perception Score (NTPS)}—a metric that quantifies the alignment between autoregressive pretraining and downstream perception tasks by measuring the overlap between their respective feature subspaces.

\item \textbf{Empirically validating the reliability of NTPS}—by demonstrating that it not only correlates strongly with linear probe accuracy across 12 diverse datasets and eight pretrained models, but also consistently increases after task-specific LoRA fine-tuning.

\item \textbf{Guiding LoRA finetuning with NTPS predictions}—by demonstrating that NTPS reliably forecasts the additional accuracy gains from LoRA finetuning, thereby providing a lightweight prescreening tool.

\end{itemize}

The remainder of the paper is organized as follows. \Cref{sec:background} reviews both empirical and theoretical foundations for applying pretrained LLMs directly to downstream tasks. \Cref{sec:alignment_score_computation} presents evidence that pretrained LLM representations are not universally effective when used directly through linear probing and therefore we introduce our proposed NTPS metric to quantify such (mis)alignment. \Cref{sec:alignment_performance_correlation} empirically demonstrates that NTPS correlates with linear probing performance across 12 downstream datasets and eight pretrained models and shows that NTPS consistently increases after task-specific LoRA fine-tuning. \Cref{sec:alignment_performance_correlation} further suggests that NTPS can serve as a good predictor of performance gain from LoRA finetuning. \Cref{sec:conclusion} summarizes our findings. Our code is available at: \url{https://github.com/Yu-AngCheng/NTPS}.

\section{Background}
\label{sec:background}
\paragraph{Utilizing Pretrained LLM Hidden Representations for Downstream Tasks}
Pretrained large language models can be leveraged for downstream tasks without any gradient‐based fine-tuning via two complementary strategies: prompt-based in-context learning and linear probing of fixed hidden representations. In prompting, GPT-3 attains strong zero- and few-shot classification performance across diverse NLP benchmarks using only natural-language templates and without any weight updates \citep{gpt3}. Likewise, when given only in-context examples, GPT-3 can perform both linear and non-linear regression at levels comparable to Random Forests and Gradient Boosting \citep{vacareanu2024words}. In linear probing, early work reveals that deep linguistic structures are capturable by training simple classifiers on fixed frozen representations \citep{tenney2019you,jawahar2019does}. Recent work show that LLM embeddings preserve Lipschitz continuity and can be utilized in high-dimensional regression settings, outperforming conventional feature-engineering baselines \citep{tang2024understandingLLM}.

\paragraph{Scaling laws for Predicting Downstream Performance in LLMs}
The downstream performance of LLMs has garnered significant attention, with special focus on scaling laws. \citet{gadre2024language} have found a power law between validation loss and model FLOPs, and also a power law between language model perplexity and average performance across all datasets. \citet{isik2025scaling} have found a log law in translation tasks between downstream translation performance and the number of tokens in the pretrained task. \citet{chen2024scaling} have proposed a two-stage approach to predict downstream performance: first mapping computational resources to pretraining loss based on a power law, then mapping pretraining loss to downstream task performance based on a linear mapping. Although these formulas achieve reasonable forecasts, they still rely on finetuning smaller models for calibration and offer limited mechanistic insight into why certain tasks benefit more from scale.

\paragraph{Theoretical Foundations of Utilizing Pretrained LLM Hidden Representations}
Recent studies in understanding pretraining objectives have revealed precise conditions under which different self-supervised losses guarantee—or fail to guarantee—strong downstream performance. \citet{balestriero2024learning} rigorously demonstrates that reconstruction‐based training, such as autoencoders or masked reconstruction, can produce features that capture all input variance yet remain uninformative for discriminative perception, underscoring that low reconstruction error alone is insufficient for transfer.   \citet{wu2023connecting} identify two necessary conditions for autoregressive next-token models to transfer effectively: the softmax head must break shift invariance, and downstream tasks must not hinge on tokens given vanishingly small pretraining probabilities. \citet{liu2023same} show that among models with identical pretraining loss, those converging to flatter minima generalize best—revealing that the implicit bias of optimization plays a crucial role in shaping downstream performance.



\section{Next Token Perception Score (NTPS): an analytical assessment metric of pretrained LLMs}
\label{sec:alignment_score_computation}
In this section, we first present empirical evidence in \Cref{subsec:evidence} showing that, while pretrained LLM representations can boost performance on some perception tasks, they may underperform or offer no benefit on others when compared to models trained from scratch. We then develop a linear‐regime theory in \Cref{subsec:linear theory} that characterizes the optimal feature maps for next‐token prediction versus downstream regression, and formally relates their alignment to excess task loss. Finally, in \Cref{divergence_illus} we illustrate how autoregressive (V) and perception‐trained (U) subspaces diverge based on our theory.

\subsection{On the Need to Monitor LLM Alignment for Perception Task}
\label{subsec:evidence}
To demonstrate that LLM representations are not universally effective, we compare the linear probing performance of pretrained models on downstream perception tasks with the performance of the same architectures trained from scratch on the downstream perception datasets.

Here, we evaluate six models: Qwen2-0.5B/1.5B \citep{qwen2} and OpenELM-270M/450M/1.1B/3B \citep{openelm}. The evaluation spans 12 downstream datasets across a variety of domains, including: Intent Classification \citep{intent_classification}, Clickbait Title Classification \citep{clickbait}, SST-2 \citep{sst2}, Bias Identification \citep{bias_identification}, Banking\citep{bank77}, Emotion\citep{emotion}, SMS Spam \citep{sms_spam}, Medical Question Pairs \citep{medical}, Rotten Tomatoes \citep{rotten_tomatoes}, CommonsenseQA \citep{commonsenseqa}, Climate Sentiment \citep{climate_sentiment}, and IMDB \citep{imdb}.

For full training, we use the following configuration across all models and datasets: Adafactor optimizer with a learning rate of $10^{-4}$, $\epsilon$ values of $10^{-30}$ and $10^{-3}$, gradient clipping threshold of 1.0, decay rate of 0.8, and weight decay of $10^{-5}$; 10, 000 training steps with a cosine learning rate scheduler with a 5\% warm-up phase. For linear probing: we use the following configuration across all models and datasets: AdamW optimizer with a learning rate of $10^{-4}$; 50 epochs. For both cases, we extract the mean token representation from the final transformer block and feed it into a linear classification head.

As shown in \cref{tab:evidence}\footnote{The full training record for OpenELM 3B is unavailable due to insufficient GPU memory (A100), even when using a batch size of 1.}, the effect of autoregressive pretraining with linear probing varies markedly across datasets. On sentiment‐analysis tasks—SST-2 \citep{sst2}, Rotten Tomatoes \citep{rotten_tomatoes}, Climate \citep{climate_sentiment} and IMDB \citep{imdb}—linear probing delivers gains of roughly 5–10\%. For intent classification \citep{intent_classification}, clickbait detection \citep{clickbait}, bias identification \citep{bias_identification}, SMS spam \citep{sms_spam} and CommonsenseQA \citep{commonsenseqa}, the performance difference between linear probing and training from scratch stays within about 1\%. In the most extreme cases—emotion recognition \citep{emotion} and medical‐text classification \citep{medical}—linear probing actually underperforms training from scratch by a substantial margin. 

\begin{table}[t]
  \centering
  \caption{Comparison of linear probing performance of pretrained models versus full-training from scratch across downstream datasets. \textbf{Linear probing can outperform, match, or underperform full-training from scratch, indicating that pretrained LLM representations are not universally effective.}}
  \setlength{\tabcolsep}{2pt}
  \begin{tabular}{l*{12}{c}}
    \toprule
    & \multicolumn{2}{c}{\makecell{Qwen2\\0.5B}} 
    & \multicolumn{2}{c}{\makecell{Qwen2\\1.5B}}
    & \multicolumn{2}{c}{\makecell{OpenELM\\270M}} 
    & \multicolumn{2}{c}{\makecell{OpenELM\\450M}}
    & \multicolumn{2}{c}{\makecell{OpenELM\\1.1B}} 
    & \multicolumn{2}{c}{\makecell{OpenELM\\3B}} \\
    \cmidrule(r){2-3} \cmidrule(r){4-5} \cmidrule(r){6-7} 
    \cmidrule(r){8-9} \cmidrule(r){10-11} \cmidrule(r){12-13}
    & Linear & Full & Linear & Full & Linear & Full & Linear & Full & Linear & Full & Linear & Full \\
    \midrule
    Intent           & 99.7 & 99.6 & 99.9 & 99.5 & 99.3 & \textbf{99.6} & 99.5 & \textbf{99.6} & 99.8 & 99.8 & 98.4 & \textbf{99.0} \\
    Clickbait Title  & 99.4 & 99.1 & 99.6 & 99.0 & 99.4 & 98.4 & 99.6 & 98.4 & 99.7 & 98.7 & 99.6 & 98.6 \\
    SST-2            & 85.4 & 80.4 & 88.2 & 82.1 & 87.6 & 80.3 & 87.7 & 82.5 & 89.3 & \textbf{92.0} & 89.9 & 78.7  \\
    Banking          & 88.1 & 85.4 & 89.4 & 82.4 & 89.8 & 86.3 & 90.5 & 84.8 & 91.3 & 83.3 & 82.0 & \textbf{82.3}  \\
    Bias             & 95.5 & 94.9 & 96.4 & 94.6 & 96.5 & 94.7 & 96.4 & 94.4 & 96.8 & 95.5 & 95.4 & 94.8  \\
    Emotion          & 66.2 & \textbf{88.3} & 69.0 & \textbf{88.3} & 70.9 & \textbf{86.8} & 72.0 & \textbf{86.7} & 73.6 & \textbf{76.9} & 63.6 & \textbf{87.9}  \\
    SMS Spam         & 99.3 & 98.7 & 99.3 & 98.9 & 99.0 & 98.8 & 99.2 & 98.7 & 99.2 & 98.9 & 98.4 & \textbf{99.0}  \\
    Medical          & 36.4 & \textbf{51.5} & 28.9 & \textbf{51.3} & 33.8 & \textbf{51.5} & 30.6 & \textbf{51.5} & 27.5 & \textbf{51.5} & 36.7 & \textbf{51.5}  \\
    Rotten Tomatoes  & 81.9 & 75.9 & 85.6 & 73.5 & 82.6 & 74.1 & 84.1 & 76.9 & 86.8 & 75.2 & 84.8 & 74.9  \\
    Commonsense      & 22.1 & 21.0 & 24.2 & 22.4 & 22.3 & 21.2 & 21.2 & \textbf{22.2} & 23.3 & 21.3 & 21.5 & 21.3  \\
    Climate          & 78.4 & 63.7 & 81.3 & 67.8 & 80.3 & 69.1 & 79.1 & 71.9 & 81.6 & 69.1 & 79.4 & 71.2  \\
    IMDB             & 92.5 & 86.0 & 94.4 & 84.9 & 92.8 & 84.2 & 99.5 & 84.1 & 94.5 & 83.5 & 94.4 & –    \\
    \bottomrule
  \end{tabular}
  \label{tab:evidence}
\end{table}

This variability suggests that the representations learned via autoregressive pretraining do not uniformly align with downstream perception tasks. Therefore, we are going to quantify this alignment (or lack thereof). As a starting point, we first build intuition and establish theoretical results in the \emph{linear regime}. As we will show in \cref{sec:alignment_performance_correlation}, this seemingly simplified setting provides surprisingly informative insights into more complex empirical scenarios.

\begin{conclusionbox}
\textbf{Takeaway}: Linear probing on pretrained LLM representations can outperform, match, or underperform full-training from scratch.
\end{conclusionbox}

\subsection{Theoretical foundations: linear approximations}
\label{subsec:linear theory}
Consider a sentence, whose representation is $X \in \mathbb{R}^{(d, \ell)}$, where $d$ is the hidden dimension size of each token and $\ell$ is the total number of tokens in this sentence. Consider two variants $X_1, X_2 \in \mathbb{R}^{d \times (\ell-1)}$ from $X$, where $i$-th column of $X_1$ is the representation of the first $i$ tokens of the sentence and $i$-th column of $X_2$ is the representation of the $i+1$-th token of the sentence.

Autoregressive training aims to find a model's parameter $\theta$ to predict $X_2$ based on $X_1$. Specifically, the training objective is to minimize the following Cross-Entropy(CE) loss:
\begin{equation}
\mathcal{L}_{\mathrm{CCE}}
= -\mathbb{E}_{X}\bigl[\log p_\theta(X_2 \mid X_1)\bigr],
\qquad
p_\theta(X_2 \mid X_1)
\propto g_\theta(f_\theta (X_1)).
\end{equation}

In the linear setting, \(f_\theta\) is a linear map \(V \in \mathbb{R}^{d \times k}\) and \(g_\theta\) is another linear map \(W \in \mathbb{R}^{k \times d}\). Besides, we assume the $i$-th column of $X_1$ represents the sum of the first $i$ tokens in $X$ and the $i$-th column of $X_2$ represents the $i+1$-th token in $X$. 

Then the loss function $\mathcal{L}$ is defined as:
\begin{equation}\label{eq:autoregressive}
\mathcal{L}
= \mathbb{E}_{X_1,X_2}\bigl\lVert W^\top V^\top X_1 - X_2 \bigr\rVert_F^2  = \mathbb{E}_{X}\bigl\lVert W^\top V^\top X\,L_1 - X\,L_2 \bigr\rVert_F^2.
\end{equation}
Here $L_1, L_2 \in \mathbb{R}^{\ell \times (\ell-1)}$ is for selecting the tokens in $X$, see \Cref{app:thm1} for full definition.

Similarly, given another pair ($U \in \mathbb{R}^{d \times k}$, $Z \in \mathbb{R}^{k \times c}$). When we use the sum of all tokens in the sentence to predict the label \(Y \in \mathbb{R}^{c}\), our loss $\mathcal{L}^*$ is defined below:
\begin{equation}\label{eq:perception}
    \mathcal{L^*} = \mathbb{E}_{X, Y} \left\| Z^\top U^\top X \, 1_{\mathrm{\ell \times 1}} - Y \right\|_F^2,
\end{equation}
Note that in both cases, instead of the CE loss, we assume a convex mean square error (MSE) loss. We now state a key guarantee for our choice: as the MSE loss vanishes, so does the probability of decoding error.  

\begin{restatable}[Equivalence between MSE and CE]{lemma}{EquivMSECE}
\label{lem:equiv-mse-ce}
Let \(X\in\mathbb R^{d\times \ell}\) be the token representations
Denote
\[
  h^* = X\,L_2,\qquad
  \hat h = W^\top V^\top X\,L_1.
\]
Assume the vocabulary embeddings \(\{w_i\}_{i=1}^V\subset\mathbb R^d\) satisfy a positive margin
\begin{equation}
  \Delta \;=\;\min_{j\neq y}\bigl\langle w_y, h^*\bigr\rangle
  - \bigl\langle w_j, h^*\bigr\rangle \;>\;0,
  \quad
  M \;=\;\max_{i\neq j}\|w_i - w_j\|_2.
\end{equation}
If $ \mathcal L \;=\;\mathbb{E} \|\hat h - h^*\|_F^2\;\to\;0,$
then
\begin{equation}
  \Pr\bigl(\arg\max_i\langle w_i,\hat h\rangle = y\bigr)
  \;\longrightarrow\;1.
\end{equation}
\end{restatable}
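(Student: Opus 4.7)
The plan is to reduce the probabilistic claim to a deterministic perturbation bound, and then apply Markov's inequality to the squared error. The core observation is that decoding is correct whenever $\hat h$ is close enough to $h^*$ in norm; thus bounding the probability of a large deviation bounds the probability of a decoding error.

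First I would write out the condition for correct argmax decoding: $\arg\max_i\langle w_i,\hat h\rangle = y$ holds iff $\langle w_y-w_j,\hat h\rangle>0$ for every $j\neq y$. Decomposing $\hat h = h^* + (\hat h - h^*)$ and using the margin assumption on the first piece gives
\begin{equation}
\langle w_y-w_j,\hat h\rangle
 \;=\; \langle w_y-w_j,h^*\rangle + \langle w_y-w_j,\hat h-h^*\rangle
 \;\geq\; \Delta \;-\; \|w_y-w_j\|_2\,\|\hat h-h^*\|_2,
\end{equation}
where the last step is Cauchy--Schwarz on the perturbation term together with the margin bound on the clean term. Using $\|w_y-w_j\|_2\leq M$ shows that a sufficient condition for correct decoding is $\|\hat h-h^*\|_2 < \Delta/M$.

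Next I would convert the deterministic bound into a probabilistic one. Because correctness is implied by the event $\{\|\hat h-h^*\|_2^2 < \Delta^2/M^2\}$, the decoding error probability is at most $\Pr(\|\hat h-h^*\|_2^2 \geq \Delta^2/M^2)$. Markov's inequality on the nonnegative random variable $\|\hat h-h^*\|_2^2$ then yields
\begin{equation}
\Pr\!\bigl(\arg\max_i\langle w_i,\hat h\rangle \neq y\bigr)
 \;\leq\; \frac{M^2}{\Delta^2}\;\mathbb E\|\hat h-h^*\|_F^2
 \;=\; \frac{M^2}{\Delta^2}\,\mathcal L.
\end{equation}
Letting $\mathcal L\to 0$ drives the right-hand side to zero, which is precisely the stated conclusion.

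The main subtlety, and essentially the only nontrivial modeling step, is the quantifier on the margin: $\Delta$ and $M$ are written as sample-dependent quantities but are needed as uniform constants for Markov's inequality to close the argument. I would either (i) assume $\Delta$ and $M$ are uniform constants valid almost surely over the data distribution, or (ii) treat them as random variables and note that the event $\{\Delta>0\}$ combined with a bounded-embedding assumption $\sup_i\|w_i\|_2<\infty$ gives the same conclusion after conditioning. Apart from this interpretational point, the argument is a direct perturbation plus Markov calculation and involves no further obstacles.
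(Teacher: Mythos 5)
Your proof matches the paper's argument step for step: the same decomposition of $\hat h$ into $h^* + (\hat h - h^*)$, the same Cauchy--Schwarz bound using the margin $\Delta$ and embedding diameter $M$, and the same sufficient condition $\|\hat h - h^*\|_2 < \Delta/M$ for correct decoding; the paper's closing step (``$\mathbb{E}\|\hat h - h^*\|_2^2 \to 0$ implies convergence in probability'') is exactly your Markov inequality, just without writing out the explicit constant $M^2/\Delta^2$. Your observation that $\Delta$ and $M$ are written as if sample-dependent but must be treated as uniform (or handled by conditioning) is a fair point that the paper leaves implicit, but it does not alter the structure of the argument.
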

\textit{The proof of \Cref{lem:equiv-mse-ce} is deferred to \Cref{app:lemma1} (for empirical validation, see \citep{hui2020evaluation})}.

Now we can solve the \cref{eq:autoregressive} and \cref{eq:perception} under the following theorem.
\begin{restatable}{theorem}{ModelOptTheorem}
\label{thm:1}
The loss functions \(\mathcal{L}\) in \cref{eq:autoregressive} and \(\mathcal{L}^*\) in \cref{eq:perception} are minimized for
\begin{equation}
    W = (V^\top \mathbb{E}[X\,L_1 L_1^\top \, X ]V)^{-1}V^\top \mathbb{E}[X\,L_1 L_2^\top \, X ] 
\end{equation}
\begin{equation}
    Z = (U^\top \mathbb{E}[X\,1_{\mathrm{\ell \times 1}} 1_{\mathrm{\ell \times 1}}^\top \, X ]U)^{-1}U^\top \mathbb{E}[X\,1_{\mathrm{\ell \times 1}} Y^\top]
\end{equation}
\(U, V\) span the top \(k\) eigenvectors of the following generalized eigenvalue problems:
\begin{empheq}[box=\fbox]{align}
  \mathbb{E}[X\,L_1 L_2^\top X^\top]\,
    \mathbb{E}[X\,L_2 L_1^\top X^\top]\,\tilde{V}
    &= \mathbb{E}[X\,L_1 L_1^\top X^\top]\,\tilde{V}\,\Lambda_{V},\\
  \mathbb{E}[X\,1_{\ell\times1}\,Y^\top]\,
    \mathbb{E}[Y\,1_{1\times\ell}\,X^\top]\,\tilde{U}
    &= \mathbb{E}[X\,1_{\ell\times\ell}\,X^\top]\,\tilde{U}\,\Lambda_{U}.
\end{empheq}
\end{restatable}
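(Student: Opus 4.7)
The plan is to reduce both minimizations to the same two-step template: first eliminate the outer head ($W$ or $Z$) in closed form, then minimize the resulting concentrated objective over the projection ($V$ or $U$), which turns into a classical generalized-eigenvalue problem. I will write out the autoregressive case in detail; the perception case is identical after substituting $L_1 \leftrightarrow 1_{\ell \times 1}$, $L_2 \leftrightarrow Y$, and $X \leftrightarrow X$ appropriately.

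Step 1. Introduce the shorthand $A := \mathbb{E}[X L_1 L_1^\top X^\top]$, $B := \mathbb{E}[X L_1 L_2^\top X^\top]$, $C := \mathbb{E}[X L_2 L_2^\top X^\top]$. Expanding the Frobenius norm as a trace and pushing the expectation inside the linear operations gives
\begin{equation*}
\mathcal{L}(V, W) \;=\; \operatorname{tr}\!\bigl(W^\top V^\top A V W\bigr) \;-\; 2\operatorname{tr}\!\bigl(W^\top V^\top B\bigr) \;+\; \operatorname{tr}(C).
\end{equation*}
This is a strictly convex quadratic in $W$ whenever $V^\top A V$ is positive definite (which we assume, so $V$ has full column rank and $A$ is PD on its range). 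Setting $\nabla_W \mathcal{L} = 0$ yields the stated closed form $W^\star = (V^\top A V)^{-1} V^\top B$. The analogous computation with $A' = \mathbb{E}[X\, 1_{\ell \times \ell}\, X^\top]$ and $B' = \mathbb{E}[X\, 1_{\ell \times 1}\, Y^\top]$ delivers $Z^\star$ for $\mathcal{L}^\star$.

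Step 2. Substituting $W^\star$ back into $\mathcal{L}$ collapses it into a "profile" loss that depends only on $V$:
\begin{equation*}
\mathcal{L}(V) \;=\; \operatorname{tr}(C) \;-\; \operatorname{tr}\!\bigl((V^\top A V)^{-1} V^\top B B^\top V\bigr).
\end{equation*}
Minimizing $\mathcal{L}$ over $V$ is therefore equivalent to maximizing the generalized trace ratio $\operatorname{tr}\!\bigl((V^\top A V)^{-1} V^\top M V\bigr)$ with $M := B B^\top = \mathbb{E}[X L_1 L_2^\top X^\top] \mathbb{E}[X L_2 L_1^\top X^\top]$. By a change of variables $\widetilde V = A^{1/2} V$, this becomes the ordinary trace maximization $\operatorname{tr}\!\bigl((\widetilde V^\top \widetilde V)^{-1} \widetilde V^\top A^{-1/2} M A^{-1/2} \widetilde V\bigr)$, whose value is invariant under right multiplication of $\widetilde V$ by any invertible $k \times k$ matrix. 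We may therefore impose the normalization $V^\top A V = I_k$ without loss of generality, reducing the problem to maximizing $\operatorname{tr}(V^\top M V)$ subject to $V^\top A V = I_k$.

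Step 3. Apply the Ky Fan / generalized Rayleigh-quotient theorem: under the constraint $V^\top A V = I_k$, the maximum of $\operatorname{tr}(V^\top M V)$ equals the sum of the top $k$ eigenvalues of the pencil $(M, A)$, and is attained precisely when the columns of $V$ span the top-$k$ generalized eigenspace, i.e.\ satisfy $M V = A V \Lambda_V$ with $\Lambda_V$ the diagonal of the top $k$ generalized eigenvalues. Unpacked, this is exactly the boxed equation for $\widetilde V$. Repeating Steps 1–3 verbatim with $(A, B, M) \mapsto (A', B', M' = B' B'^\top)$ yields the generalized eigenproblem for $\widetilde U$, completing the proof.

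The main obstacle I anticipate is bookkeeping rather than depth: one has to be careful that $V$ is only identified up to a $k \times k$ change of basis (hence the theorem says "span" rather than "equal"), and one needs the non-degeneracy assumption that $V^\top A V$ (and $U^\top A' U$) is invertible so that $W^\star$ and $Z^\star$ are well defined and the concentrated objective is finite. A minor secondary check is that the gradient and second-order optimality computations are done correctly through the trace identity $\operatorname{tr}(W^\top V^\top A V W) = \operatorname{tr}(A V W W^\top V^\top)$, but this is routine matrix calculus.
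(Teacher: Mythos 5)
Your proof is correct and follows essentially the same route as the paper: close out the inner head ($W$ or $Z$) by first-order optimality, substitute back to obtain the concentrated trace-ratio objective $\operatorname{tr}\!\bigl((V^\top A V)^{-1} V^\top B B^\top V\bigr)$, normalize so that $V^\top A V = I_k$, and invoke the Ky Fan / generalized Rayleigh-quotient characterization to identify the maximizer with the top-$k$ generalized eigenspace of $(BB^\top, A)$. The only cosmetic difference is that you make the whitening $\widetilde V = A^{1/2} V$ explicit before imposing the constraint, whereas the paper directly observes that $\tilde V = V(V^\top A V)^{-1/2}$ satisfies the constraint and spans the same column space; these are interchangeable. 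You also state the needed non-degeneracy assumption (full column rank of $V$, positive definiteness of $V^\top A V$) more carefully than the paper does.
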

\textit{The proof of \Cref{thm:1} is deferred to \Cref{app:thm1}}.

From Theorem~\ref{thm:1}, it shows that $U$ and $V$ capture distinct co-variability structures; hence, the autoregressively derived $V$ may not generalize well to downstream tasks that depend on $U$. 

\begin{conclusionbox}
\textbf{Takeaway}: Under a linear regime, autoregressive and perceptual objectives correspond to distinct generalized eigenvalue problems and thus learn feature that capture different covariance structures. 
\end{conclusionbox}

\subsection{Divergence of autoregressive ($V$) vs. perceptual ($U$) features}
\label{divergence_illus}

\begin{figure}[hbtp]
\centering
\begin{minipage}{\linewidth}
\begin{algorithm}[H]
\caption{Computation of NTPS}
\label{alg:ntps}
\begin{algorithmic}[1]
\REQUIRE Dataset $\mathcal{D} = \{(x_i, y_i)\}_{i=1}^n$, pretrained transformer $f$, tokenizer $\mathcal{T}$, hidden dimension $d$, subspace dimension $k$, target layer $l$
\ENSURE NTPS at layer $l$  using top-$k$ subspace

\STATE Initialize: 
\texttt{meanXX}$ \in \mathbb{R}^{d \times d}$, 
\texttt{meanXY}$ \in \mathbb{R}^{d \times c}$, 
\texttt{cov0}$ \in \mathbb{R}^{d \times d}$, 
\texttt{cov1}$ \in \mathbb{R}^{d \times d}$

\FOR{each sample $(x, y) \in \mathcal{D}$}
    \STATE Tokenize $x$ with $\mathcal{T}$ to obtain input IDs and attention mask
    \STATE Run forward pass of $f$ to obtain token-level hidden states $X^l \in \mathbb{R}^{\ell \times d}$ at layer $l$
    \STATE Construct $L_1 \in \mathbb{R}^{\ell \times (\ell-1)}$ (upper-triangular), $L_2 \in \mathbb{R}^{\ell \times (\ell-1)}$ (lower-shifted identity)
    \STATE One-hot encode label $y \rightarrow Y \in \mathbb{R}^{c}$
    \STATE Compute mean token representation: $\bar{X}^l = \frac{1}{\ell} \sum_{j=1}^\ell X^l_j \in \mathbb{R}^{d}$

    \STATE $\texttt{meanXX} \mathrel{+}= \frac{1}{n} \bar{X}^l (\bar{X}^l)^\top$
    \STATE $\texttt{meanXY} \mathrel{+}= \frac{1}{n} \bar{X}^l Y^\top$
    \STATE $\texttt{cov0} \mathrel{+}= \frac{1}{n} (X^l)^\top L_1 L_1^\top X^l$
    \STATE $\texttt{cov1} \mathrel{+}= \frac{1}{n} (X^l)^\top L_1 L_2^\top X^l$
\ENDFOR

\STATE Compute top generalized eigenspace $U$ from $(\texttt{meanXY}\,\texttt{meanXY}^\top,\, \texttt{meanXX})$
\STATE Compute top generalized eigenspace $V$ from $(\texttt{cov1}\,\texttt{cov1}^\top,\, \texttt{cov0})$
\STATE Extract top-$k$ directions: $U_k \leftarrow$ first $k$ columns of $U$, $V_k \leftarrow$ first $k$ columns of $V$
\STATE Compute projection: $P_k \leftarrow V_k (V_k)^+$
\STATE Compute NTPS: $\texttt{NTPS} \leftarrow \|P_k U_k\|_F^2 / \|U_k\|_F^2$
\RETURN \texttt{NTPS}
\end{algorithmic}
\end{algorithm}
\end{minipage}
\vspace{-2mm}
\end{figure}

To illustrate that $U$ and $V$ capture fundamentally different axes of covariation, we extract the first embedding layer activations from a pretrained models (OpenELM-450M) on the Emotion\citep{emotion} dataset.  We then solve the two generalized eigenvalue problems in Theorem~\ref{thm:1} to obtain the projection matrices $U$ and $V$, each truncated to its top two eigenvectors.  
Figure~\ref{fig:uv-projections} visualizes two‑dimensional projections of representative words under the perception ($U^{\top}X$) and autoregressive ($V^{\top}X$) mappings. In the left panel, points colored by emotion form two nearly linearly separable clusters—positive versus negative—reflecting the label‑conditioned objective. The right panel shows that the same words in $V$‑space overlap heavily across emotion classes, indicating that next‑token prediction does not prioritize emotional polarity. Instead, the $V$‑space shows a clear grouping of adjectives vs. function words, suggesting that autoregressive training emphasizes syntactic category. In short, $U$ specializes in downstream, label‑relevant semantics such as sentiment, whereas $V$ encodes the structural, syntactic information essential for language modeling.

To obtain a continuous measure of overlap between the feature subspaces learned by $U$ (perception) and $V$ (autoregressive), we introduce the following alignment score.  Let
$P \;=\; V\,V^\dagger$
be the orthogonal projector onto the column space of $V$, where $V^\dagger$ is the Moore–Penrose pseudoinverse of $V$.  We then define
\[
\mathrm{NTPS}(U,V)\;=\;\frac{\|PU\|_{F}^{2}}{\|U\|_{F}^{2}}.
\]

Here, $\|P\,U\|_F^2$ is the total squared projection of $U$ onto $V$’s subspace, and $\|U\|_F^2$ is the total variance in $U$.  By construction, 
$0 \,\le\, \mathrm{NTPS}(U,V) \,\le\, 1,$ achieving $1$ if and only if the column spaces of $U$ and $V$ coincide, and approaching $0$ as they become orthogonal.  Higher values thus indicate greater alignment between the two objectives. The pseudocode for computing NTPS is provided in \Cref{alg:ntps}.

\begin{restatable}[Excess regression loss vs.\ subspace alignment]{theorem}{AlignmentLossTheorem}
\label{thm:alignment-loss}
Let \(U\) be the optimal perceptual encoder obtained from the generalized eigenproblem.
For any other encoder \(V\), define the orthogonal projector. Let \(\Delta\mathcal L := \mathcal L^{\!*}(V)-\mathcal L^{\!*}(U)\)
be the excess regression loss of \(V\).

Then there exist task-dependent positive constants
\(C_{\min}, C_{\max}\) such that
\begin{equation}
\boxed{\;
C_{\min}\bigl(1 - \mathrm{NTPS}(U,V)\bigr)
\;\le\;
\Delta\mathcal L
\;\le\;
C_{\max}\bigl(1 - \mathrm{NTPS}(U,V)\bigr).
\;}
\end{equation}
\end{restatable}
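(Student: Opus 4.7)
The plan is to reduce the excess loss $\Delta\mathcal{L}$ to a trace over orthogonal projectors and then sandwich that trace between two multiples of $1-\mathrm{NTPS}(U,V)$. First I substitute the closed-form minimizer $Z=(W^{\top}AW)^{-1}W^{\top}B$ from Theorem~\ref{thm:1}, with $A:=\mathbb{E}[X\,1_{\ell\times 1}1_{1\times\ell}X^{\top}]$ and $B:=\mathbb{E}[X\,1_{\ell\times 1}Y^{\top}]$, back into $\mathcal{L}^{*}$. For any encoder $W$ this yields $\mathcal{L}^{*}(W)=\mathrm{tr}(\mathbb{E}[YY^{\top}])-\mathrm{tr}(B^{\top}W(W^{\top}AW)^{-1}W^{\top}B)$, so $\Delta\mathcal{L}=\mathrm{tr}(B^{\top}U(U^{\top}AU)^{-1}U^{\top}B)-\mathrm{tr}(B^{\top}V(V^{\top}AV)^{-1}V^{\top}B)$, which depends on $U$ and $V$ only through their column spaces.

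Next I whiten by $A^{1/2}$: set $\tilde{W}:=A^{1/2}W$, $M:=A^{-1/2}B$, and $\tilde{\Pi}_{\tilde{W}}:=\tilde{W}(\tilde{W}^{\top}\tilde{W})^{\dagger}\tilde{W}^{\top}$. A direct rewrite gives $\Delta\mathcal{L}=\mathrm{tr}\bigl((\tilde{\Pi}_{\tilde{U}}-\tilde{\Pi}_{\tilde{V}})MM^{\top}\bigr)$, and Theorem~\ref{thm:1} identifies $\tilde{\Pi}_{\tilde{U}}$ as the standard top-$k$ eigenprojector of $MM^{\top}$ with eigenvalues $\sigma_{1}\ge\cdots\ge\sigma_{k}>\sigma_{k+1}\ge\cdots\ge 0$. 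Expanding both traces in this eigenbasis and using $\mathrm{tr}(\tilde{\Pi}_{\tilde{V}})=k$ to pair the mass $\tilde{V}$ gains above the spectral gap with the mass it loses below, I obtain the two-sided Weyl-type inequality
\begin{equation*}
(\sigma_{k}-\sigma_{k+1})\,\tau\;\le\;\Delta\mathcal{L}\;\le\;\sigma_{1}\,\tau,\qquad \tau:=\bigl\|(I-\tilde{\Pi}_{\tilde{V}})\tilde{\Pi}_{\tilde{U}}\bigr\|_{F}^{2},
\end{equation*}
with $\tau=\sum_{i}\sin^{2}\theta_{i}$ for the principal angles $\theta_{i}$ between $\mathrm{col}(\tilde{U})$ and $\mathrm{col}(\tilde{V})$.

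The final step converts $\tau$ into $\|(I-\Pi_{V})U\|_{F}^{2}=\|U\|_{F}^{2}\bigl(1-\mathrm{NTPS}(U,V)\bigr)$. Using the variational identity $\tau=\min_{C}\|A^{1/2}(Q_{U}-VC)\|_{F}^{2}$, where $Q_{U}$ is a Euclidean-orthonormal basis of $\mathrm{col}(U)$, together with the operator-norm sandwich $\lambda_{\min}(A)\,I\preceq A\preceq\lambda_{\max}(A)\,I$, I obtain $\lambda_{\min}(A)\,\|(I-\Pi_{V})Q_{U}\|_{F}^{2}\le\tau\le\lambda_{\max}(A)\,\|(I-\Pi_{V})Q_{U}\|_{F}^{2}$. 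A second bridge via the QR factorization $U=Q_{U}R_{U}$ passes between $\|(I-\Pi_{V})Q_{U}\|_{F}^{2}$ and $\|(I-\Pi_{V})U\|_{F}^{2}$ at the cost of the extreme singular values of $R_{U}$. Composing these with the Weyl sandwich yields the claim, with $C_{\min}$ and $C_{\max}$ depending on the spectral gap $\sigma_{k}-\sigma_{k+1}$, on $\sigma_{1}$, on $\kappa(A)=\lambda_{\max}(A)/\lambda_{\min}(A)$, and on $\|U\|_{F}^{2}$ together with the spectrum of $U^{\top}U$.

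The main obstacle will be this last transfer. The Euclidean projector onto $\mathrm{col}(V)$ used by NTPS and the Euclidean projector onto $\mathrm{col}(A^{1/2}V)$ that emerges naturally from the loss are not the same unless $A\propto I$, so the principal angles in the two spaces are only equivalent up to factors controlled by $\kappa(A)$. Strict positivity of both $C_{\min}$ and $C_{\max}$ therefore requires both the spectral gap $\sigma_{k}-\sigma_{k+1}>0$ and bounded conditioning of $A$ on the subspaces spanned by $U$ and $V$; these mild genericity assumptions are precisely what make the constants task-dependent in the stated bound.
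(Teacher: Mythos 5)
Your proof takes a genuinely different route from the paper's, although it reaches the same conclusion. Both start by reducing $\Delta\mathcal L$ to the trace difference $\operatorname{tr}\!\bigl(U^\top MM^\top U(U^\top NU)^{-1}\bigr)-\operatorname{tr}\!\bigl(V^\top MM^\top V(V^\top NV)^{-1}\bigr)$ and then whiten by $N^{1/2}$. The paper writes $\Delta\mathcal L=\sum_{i\le k}\Lambda_{ii}r_i$ where $r_i$ is the $N$-weighted residual of $u_i$ against $\mathrm{col}(V)$, bounds each $r_i$ by $\lambda_{\min/\max}(N)$ times the Euclidean residual, and pulls out $\Lambda_{\min/\max}$; this expansion of $\Delta\mathcal L$ is exact only when $\mathrm{rank}(MM^\top)\le k$ (i.e.\ $\sigma_{k+1}=0$ in your notation), otherwise the paper's expression overestimates $\Delta\mathcal L$ and its lower bound does not follow. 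You instead expand $\operatorname{tr}\bigl((\tilde\Pi_{\tilde U}-\tilde\Pi_{\tilde V})MM^\top\bigr)$ in the full eigenbasis, use $\operatorname{tr}(\tilde\Pi_{\tilde V})=k$ to pair mass gained above the gap with mass lost below it, and obtain the two-sided Weyl bound with constants $\sigma_k-\sigma_{k+1}$ and $\sigma_1$. This buys correctness when $\sigma_{k+1}>0$, at the cost of requiring a spectral gap and giving a weaker lower-bound constant ($\sigma_k-\sigma_{k+1}$ instead of $\sigma_k=\Lambda_{\min}$). Your final transfer contains a mislabeling: since the paper's $U$ is $N$-orthonormal ($U^\top NU=I_k$), the whitened columns $N^{1/2}u_i$ are already Euclidean-orthonormal, so the exact identity is $\tau=\min_C\|N^{1/2}(U-VC)\|_F^2$ with $U$ itself, not a Euclidean-orthonormal $Q_U$. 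Using $U$ directly, the operator sandwich yields $\lambda_{\min}(N)\|U\|_F^2(1-\mathrm{NTPS})\le\tau\le\lambda_{\max}(N)\|U\|_F^2(1-\mathrm{NTPS})$ immediately, and the extra QR bridge through $R_U$ is unnecessary and only inflates the constants. With that correction your argument composes cleanly to the stated inequality.
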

\textit{The proof of \Cref{thm:alignment-loss} is deferred to \Cref{app:thm2}}.

This shows that the extra regression loss of the autoregressive trained encoder $V$ over the perception trained $U$ is tightly controlled by their subspace alignment: as $NTPS(U,V)$ approaches 1 (perfect alignment), the excess loss vanishes, and as it decreases, the loss grows linearly within the constant bounds.

With these theoretical guarantees in hand, we now turn to empirical validation, demonstrating how our NTPS alignment score—derived in this simplified linear regime—predicts downstream performance across a range of nonlinear, pretrained models.

\begin{conclusionbox}
\textbf{Takeaway}: Our NTPS alignment score quantitatively captures how much of the perception‐trained subspace lies in the autoregressive subspace and is proved to bound the excess loss.
\end{conclusionbox}

\section{Empirical validation and practical utility of NTPS} 
\label{sec:alignment_performance_correlation}
In this section, we show that our NTPS—though derived under a linear model—captures meaningful alignment in nonlinear large‐scale LLMs. First, in \Cref{subsec:corr_linear} demonstrate a monotonic relationship between NTPS and downstream performance by showing Spearman correlations with both MSE loss and classification accuracy across eight pretrained models and 12 downstream perception datasets. Next in \Cref{subsec: NTPS_increase}, we examine how parameter-efficient LoRA finetuning shifts feature subspaces to boost NTPS, offering an interpretable lens on why LoRA works. Finally and most importantly, in \Cref{subsec: NTPS_lora}, we demonstrate that NTPS itself can predict the magnitude of accuracy gains from LoRA, making it a practical pre-screening metric for when fine-tuning will be most effective.

\subsection{Correlation between NTPS and linear probing performance}
\label{subsec:corr_linear}
First, we demonstrate that this score correlates with the downstream performance of eight pretrained models across 12 diverse datasets.

\begin{figure}[t]
    \centering
    \includegraphics[scale=1.0]{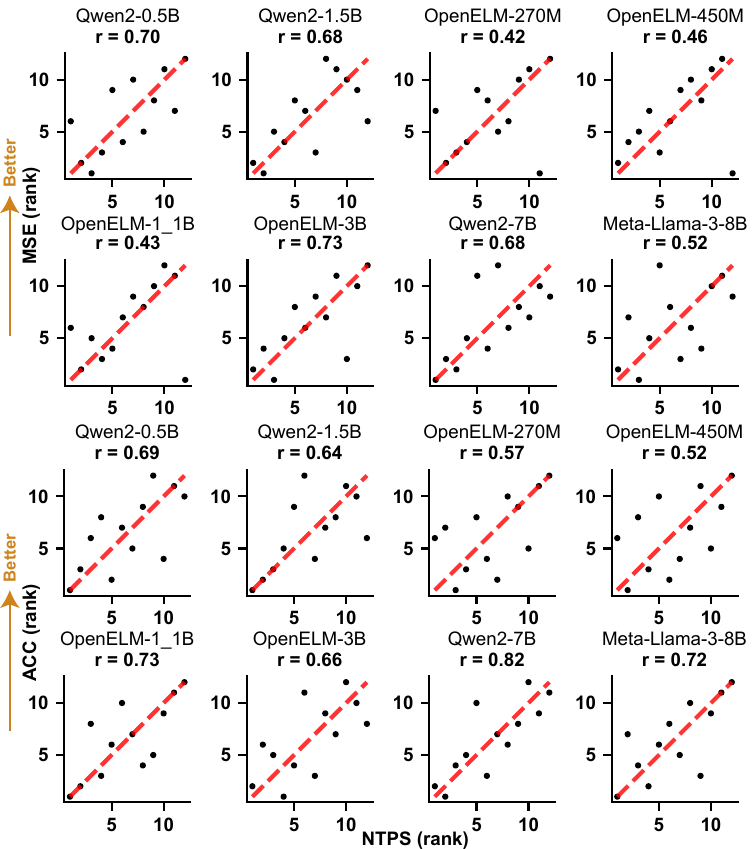}
    \caption{Correlation between NTPS and downstream MSE loss (rows 1 and 2), and between NTPS and accuracy (rows 3 and 4), with dashed lines indicating linear regression fits. \textbf{Higher alignment scores correspond to lower MSE loss and higher accuracy in downstream tasks.}}
    \label{fig:correlation}
\end{figure}

Similar to \cref{subsec:evidence}, we now include two more pretrained models (Qwen2-7B \citep{qwen2} and LLaMA3-8B \citep{llama3}), with the other models keep the same. Besides, we reuse the 12 datasets as described in \cref{subsec:evidence}. 


Downstream performance is measured in two complementary ways. First, we train a linear layer on each downstream dataset using ordinary least-sqaure (OLS) regression with close-form solution since our theoretical derivation in \cref{thm:1} is based on MSE loss (as shown in \cref{eq:autoregressive} and \cref{eq:perception}). And, we use the final MSE loss as the downstream performance metric. Second, we train a linear layer on each downstream dataset using logistic regression with saga optimizer under a CE loss since it better reflects practical usage. And, we use the final accuracy as the downstream performance metric. 

For each model across all datasets, we compute alignment scores over all layers (from the word‐embedding layer through the penultimate layer) and every $k$ proportion value from 0.05 to 0.95 in increments of 0.05. We then assess the monotonic relationship between alignment and downstream performance using Spearman’s \(r\in[-1,1]\). where \(r=1\) denotes perfectly concordant orderings.

To summarize each model succinctly, we report for each model the alignment score corresponding to the configuration that yields the strongest Spearman's $r$.
Besides, all results are obtained on training set to minimize confounding factors such as distribution shifts between train and test splits, which may obscure the true relationship between alignment score and downstream performance. 

The results, shown in \cref{fig:correlation}, reveal a clear trend: higher alignment scores are associated with lower MSE loss and higher accuracy in downstream tasks. This strong correspondence indicates that our alignment metric—despite its derivation under simplified linear assumptions—serves as an effective proxy for task alignment even in highly nonlinear models. It provides insight into when autoregressive training is beneficial for downstream tasks and can serve as a practical metric for anticipating the effectiveness of linear probing on pretrained models.

\begin{conclusionbox}
\textbf{Takeaway}: NTPS shows a clear monotonic relationship with downstream linear probe performance in LLMs—higher NTPS predicts better performance.
\end{conclusionbox}

\subsection{LoRA finetuning enhances NTPS}
\label{subsec: NTPS_increase}
As a step further, we provide an interpretation of why LoRA is effective for adapting pretrained LLMs to downstream tasks, through the lens of NTPS.

\begin{table}[t]
  \centering
  \caption{Relative improvement (\%) of NTPS after LoRA finetuning across models and downstream datasets. \textbf{NTPS is universally increased after LoRA finetuning, suggesting that LoRA finetuning enhances the overlap between feature subspaces of autoregressive training and downstream perception tasks.} For small models, NTPS slightly decreases probably because these models sacrifice the next-token prediction for higher downstream performance due to their limited capability.}
  \setlength{\tabcolsep}{1.8pt}
  \begin{tabular}{lcccccccc}
    \toprule
    Dataset & 
      \makecell{Qwen2\\0.5B} & 
      \makecell{Qwen2\\1.5B} & 
      \makecell{Qwen2\\7B} & 
      \makecell{OpenELM\\270M} & 
      \makecell{OpenELM\\450M} & 
      \makecell{OpenELM\\1.1B} & 
      \makecell{OpenELM\\3B} & 
      \makecell{LlaMA-3\\8B} \\
    \midrule
    Intent             &  1.9  &  1.6   &  53.2   &  -1.1  &  -1.0  &   0.9  &  88.5   &  77.8  \\
    Clickbait Title    &  1.7  &  1.9   &  70.1   &  -0.6  &  -0.3  &  -0.5  &  74.8   &  74.3  \\
    SST-2              &  1.4  &  4.3   & 105.9   &  -0.8  &  -0.7  &   2.1  & 100.0   & 102.5  \\
    Banking            &  2.3  &  0.9   &  67.8   &  -0.8  &  -0.6  &   0.7  &  79.5   &  87.1  \\
    Bias               &  3.1  &  3.6   &  78.3   &  -1.4  &  -1.4  &  -0.4  &  73.3   &  81.3  \\
    Emotion            &  2.3  &  2.9   & 109.0   &  -1.5  &  -2.3  &   0.5  &  83.4   &  80.0  \\
    SMS Spam           &  0.7  & -0.1   & 124.1   &  -1.4  &  -0.7  &   0.1  &  78.2   &  79.0  \\
    Medical            &  2.2  &  2.3   & 120.5   &  -0.8  &  -0.2  &   0.7  & 228.7   &  92.9  \\
    Rotten Tomatoes    &  0.8  &  0.2   & 105.6   &  -1.6  &  -0.4  &   0.1  &  88.6   &  84.3  \\
    Commonsense        &  1.1  &  0.7   & 108.8   &   1.1  &   0.6  &   1.0  &  91.6   &  95.0  \\
    Climate            &  1.5  &  1.8   & -14.5   &  -0.4  &  -1.8  &   1.2  &  76.7   &  99.9  \\
    IMDB               &  0.4  & -0.5   & 135.1   &  0.9   &  1.5   &   1.0  &  109.1  &  89.4  \\
    \bottomrule
  \end{tabular}
  \label{tab:ntps_improve}
\end{table}

Specifically, we compute NTPS for the same eight models and 12 datasets used in \cref{sec:alignment_performance_correlation}, using the exact same configuration for NTPS computation, but after applying LoRA. We adopt a consistent finetuning setup across all experiments: LoRA is applied to all QKV projection layers with rank 32, $\alpha = 32$, no bias, and a dropout rate of 0.05. For each input, we extract the mean token representation from the final transformer block and pass it into a linear classification head. We use the Adafactor optimizer with a learning rate of $10^{-4}$, $\epsilon$ values of $10^{-30}$ and $10^{-3}$, gradient clipping threshold of 1.0, decay rate of 0.8, and weight decay of $10^{-5}$. Training is conducted for 5000 steps with a cosine learning rate scheduler and a 5\% warm-up phase.

As shown in \cref{tab:ntps_improve}, NTPS increases in \textit{71 out of 96} runs after applying LoRA. This provides empirical support for our interpretation: LoRA may improve downstream task performance by adjusting the representations to better align the feature subspaces used for autoregressive pretraining and those required for downstream tasks, especially in large models. Interestingly, we do see that NTPS slightly decreases in small models like OpenELM-270M and OpenELM-450M, this is probably because these model sacrifice the next-token prediction capability in exchange for higher downstream performance due to its limited capability.

In summary, our empirical validation in \cref{subsec:corr_linear} and \cref{subsec: NTPS_increase} shows that NTPS not only correlates strongly with linear-probe performance across diverse pretrained models and downstream tasks, but also reliably increases after LoRA finetuning—confirming its utility as a practical measure of feature-subspace alignment. Building on these findings, we now turn to leveraging NTPS to predict the additional accuracy gains afforded by task-specific LoRA adaptations.

\begin{conclusionbox}
\textbf{Takeaway}: LoRA finetuning increases NTPS, shifting feature subspaces to better align autoregressive representations with downstream tasks.
\end{conclusionbox}

\subsection{Predicting LoRA finetuning gain with NTPS}
\label{subsec: NTPS_lora}
Now, we are going to show that our NTPS can also serve for practical usage, particularly for predicting the LoRA finetuning gain.
\begin{figure}[t]
    \centering
    \includegraphics[scale=1.0]{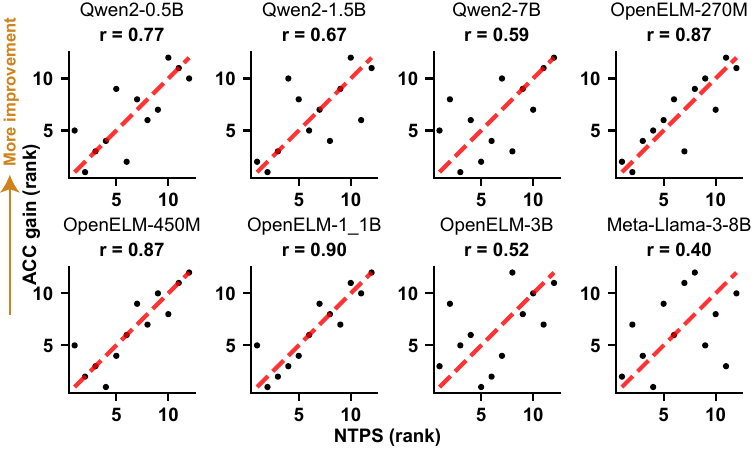}
    \caption{Correlation between NTPS and accuracy gain (LoRA finetuning accuracy-linear probing accuracy), with dashed lines indicating linear regression fits. \textbf{Higher alignment scores correspond to lower accuracy gain after LoRA finetuning in downstream tasks.}}
    \label{fig:correlation_lora}
\end{figure}

To evaluate whether NTPS can forecast the benefit of parameter-efficient fine-tuning, we measure the “ACC gain” as the difference between accuracy after LoRA adaptation and the baseline linear-probe accuracy (both on the test split). We reuse the same eight pretrained models (Qwen2-0.5B/1.5B/7B, OpenELM-270M/450M/1.1B/3B, and LLaMA3-8B) and the 12 downstream classification tasks described in Section \ref{sec:alignment_performance_correlation}. Again we (1) compute NTPS over all layers and $k$s exactly as before, (2) train a linear probe under CE to get baseline accuracy (AdamW optimizer, learning rate of $10^{-4}$; 50 epochs), and (3) apply LoRA (rank 32, $\alpha=32$, 5000 steps, Adafactor, 5\% warm-up) and record the adapted accuracy. Finally, we correlate NTPS with the observed LoRA gains using Spearman’s $r$.

As plotted in Figure \ref{fig:correlation_lora}, there is a clear monotonic relationship: models with lower NTPS enjoy larger accuracy gains from LoRA, whereas higher NTPS exhibit only modest improvements. Across the eight models, Spearman’s $r$ ranges from 0.40 up to 0.90 (higher absolute $r$ indicates stronger predictivity), confirming that NTPS is a reliable indicator of how much headroom remains for downstream adaptation. 

In practical terms, if a pretrained model yields a low NTPS on the target task, one can anticipate a sizable boost from LoRA; conversely, if a model yields a high NTPS, it is unlikely to benefit substantially from further finetuning. This makes NTPS a lightweight pre-screening tool to decide when parameter-efficient fine-tuning is most worthwhile.

\begin{conclusionbox}
\textbf{Takeaway}: NTPS inversely predicts the accuracy gains from LoRA finetuning: tasks with low initial alignment see the largest boosts.
\end{conclusionbox}

\section{Conclusion}
\label{sec:conclusion}
In conclusion, in this paper we have introduced the \emph{NTPS}, a simple yet powerful metric for measuring the alignment between the feature subspaces learned by autoregressive pretraining and those required for downstream perception tasks. In a linear setting, we proved that NTPS both upper- and lower-bounds the excess regression loss of an autoregressive encoder relative to an ideal perceptual encoder. Empirically, we demonstrate that NTPS—computed in closed form from pretrained representations and labeled data—correlates strongly classification accuracy across 12 diverse NLP datasets and eight pretrained models ranging from 270 M to 8 B parameters. By measuring NTPS before and after LoRA finetuning, we show that LoRA systematically increases subspace overlap and thus improves downstream performance as well as NTPS. Finally, we illustrate one important practical usage of our NTPS, i.e., predicting the accuracy gain after LoRA finetuning.

Our work still has several limitations that can be addressed in future research. First, NTPS is derived under a simplified linear setting and does not currently account for the role of attention mechanisms. Incorporating attention—possibly starting with linear attention models \citep{katharopoulos2020linear-attention}—could lead to a more precise formulation. Second, we have not yet explored how to select optimal configurations for computing NTPS in each model beforehand. For example, the choice of $k$ may depend on the model’s compression rate. Developing a more principled configuration strategy could improve efficiency and eliminate the need to exhaustively search over all possible settings.

\section{Broader impacts}
\label{sec:impacts}
This paper presents work whose goal is to advance the field of 
deep learning. There are many potential societal consequences 
of our work, none of which we feel must be specifically highlighted here.

\newpage
\appendix
\section{Appendix}
\subsection{Proof of Lemma 1}
\EquivMSECE*
\begin{proof}
For any \(j\neq y\), write
\[
  \langle w_y,\hat h\rangle - \langle w_j,\hat h\rangle
  = \bigl\langle w_y, h^* \bigr\rangle
  - \bigl\langle w_j, h^* \bigr\rangle
  + \bigl\langle w_y - w_j,\,\hat h - h^*\bigr\rangle.
\]
By definition the first term is at least \(\Delta\), and by Cauchy–Schwarz
\[
  \bigl|\langle w_y - w_j,\,\hat h - h^*\rangle\bigr|
  \;\le\;
  \|w_y - w_j\|_2\,\|\hat h - h^*\|_2
  \;\le\;
  M\,\|\hat h - h^*\|_2.
\]
Hence
\[
  \langle w_y,\hat h\rangle - \langle w_j,\hat h\rangle
  \;\ge\;
  \Delta - M\,\|\hat h - h^*\|_2.
\]
In particular, whenever
\(\|\hat h - h^*\|_2 < \Delta/M\) we have \(\langle w_y,\hat h\rangle>\langle w_j,\hat h\rangle\)
for all \(j\neq y\), so the arg‐max picks the correct token \(y\).  Since
\(\mathbb E\|\hat h - h^*\|_2^2 \to 0\) implies \(\|\hat h - h^*\|_2\to0\) in probability,
the probability of decoding error goes to zero.
\label{app:lemma1}
\end{proof}

\subsection{Proof of Theorem 1}
\ModelOptTheorem*
\begin{proof}
\label{app:thm1}

Consider a sentence, whose token representation is $X \in \mathbb{R}^{(d, \ell)}$, where $d$ is the hidden dimension size of each token and $\ell$ is the total number of tokens in this sentence. Consider two variants $X_1, X_2 \in \mathbb{R}^{d \times (\ell-1)}$ from $X$. For $X_1$, the $i$-th column represents the sum of the first $i$ tokens in $X$. For $X_2$, the $i$-th column represents the $(i+1)$-th token in $X$ so the whole matrix denotes the sequence of tokens from position 2 through $\ell$ in $X$.\\

Given a linear model to predict the $n$-th token given the sum of the previous $n-1$ tokens that contains a linear mapping $V \in \mathbb{R}^{d \times k}$ and a linear mapping $W \in \mathbb{R}^{k \times d}$. And set the loss $\mathcal{L}$ as:
\begin{equation}
\mathcal{L} = \mathbb{E}_{X_1, X_2}[\|W^\top V^\top X_1 - X_2\|_F^2],
\end{equation}

Denote $A = V^\top X_1 \in \mathbb{R}^{k\times(\ell-1)}$, then:
\begin{equation}
\begin{aligned}
    \mathcal{L} & = \mathbb{E}_{X_1, X_2}[\|W^\top A-X_2\|_F^2] \\
    & = \mathbb{E}_{X_1, X_2}[Tr((W^\top A-X_2)^\top (W^\top A-X_2))] \\
    & = \mathbb{E}_{X_1, X_2}[Tr(A^\top WW^\top A) - 2Tr(X_2^\top W^\top A)+Tr(X_2^\top X_2)]
\end{aligned}
\end{equation}

Taking the derivative of $\mathcal{L}$ w.r.t. $W$.
\begin{equation}
\begin{aligned}  
    \frac{\partial \mathcal{L}}{\partial W} & = \frac{\partial \mathbb{E}_{X_1, X_2}[Tr(A^\top WW^\top A) - 2Tr(X_2^\top W^\top A)+Tr(X_2^\top X_2)]}{\partial W} \\
    & = \frac{\partial \mathbb{E}_{X_1, X_2}[Tr(A^\top WW^\top A)]}{\partial W} - 2\frac{\partial \mathbb{E}_{X_1, X_2}[Tr(X_2^\top W^\top A)]}{\partial W} \\
    & = \frac{\partial \mathbb{E}_{X_1, X_2}[Tr(W^\top AA^\top W)]}{\partial W} - 2\frac{\partial \mathbb{E}_{X_1, X_2}[Tr(W^\top AX_2^\top )]}{\partial W} \\
    & = 2E_{X_1, X_2}[AA^\top W - AX_2^\top ] \\
    & = 2E_{X_1, X_2}[V^\top X_1X_1^\top VW - V^\top X_1X_2^\top ] \\ 
    & = 2E_{X_1, X_2}[V^\top X_1(X_1^\top VW - X_2^\top )]
\end{aligned}
\label{eq:l-wrt-w}
\end{equation}

To minimize $\mathcal{L}$, we set $\cref{eq:l-wrt-w} = 0$:
\begin{equation}
\begin{aligned}
    2E_{X_1, X_2}[V^\top X_1(X_1^\top VW - X_2^\top )] & = 0 \\
    E_{X_1, X_2}[V^\top X_1X_1^\top VW] & = E_{X_1, X_2}[V^\top X_1X_2^\top ]\\
\end{aligned}
\label{eq:l-wrt-w=0}
\end{equation}

Assume that $V^TX_1$ has rank $k$, then $V^TX_1X_1^TV$ is invertible, and we can express $W$ from \cref{eq:l-wrt-w=0} as below:
\begin{equation}
\begin{aligned}
    W & = E_{X_1, X_2}[(V^\top X_1X_1^\top V)^{-1}]E_{X_1, X_2}[V^\top X_1X_2^\top ] \\ 
    W & = (V^\top \mathbb{E}[X_1X_1^\top ]V)^{-1}V^\top \mathbb{E}[X_1X_2^\top ]
\end{aligned}
\end{equation}

By plugging in $W$ back to $\mathcal{L}$, we have:
\begin{equation}
\begin{aligned}
    \mathcal{L} & = \mathbb{E}[Tr(X_1^TV(V^\top \mathbb{E}[X_1X_1^\top ]V)^{-1}V^\top \mathbb{E}[X_1X_2^\top ]\mathbb{E}[X_2X_1^T]V(V^T\mathbb{E}[X_1X_1^T]V)^{-1}V^TX_1)] \\
    & \quad - 2\mathbb{E}[Tr(X_2^\top \mathbb{E}[X_2X_1^\top]V(V^\top \mathbb{E}[X_1X_1^\top] V)^{-1}V^\top X_1)] + \mathbb{E}[Tr(X_2^\top X_2)] \\
    & = Tr(\mathbb{E}[X_1^TV(V^\top \mathbb{E}[X_1X_1^\top ]V)^{-1}V^\top \mathbb{E}[X_1X_2^\top ]\mathbb{E}[X_2X_1^T]V(V^T\mathbb{E}[X_1X_1^T]V)^{-1}V^TX_1]) \\
    & \quad - 2Tr(\mathbb{E}[X_2^\top \mathbb{E}[X_2X_1^\top]V(V^\top \mathbb{E}[X_1X_1^\top] V)^{-1}V^\top X_1]) + Tr(\mathbb{E}[X_2^\top X_2]) \\
    & = Tr(\mathbb{E}[V^TX_1X_1^TV(V^\top \mathbb{E}[X_1X_1^\top ]V)^{-1}V^\top \mathbb{E}[X_1X_2^\top ]\mathbb{E}[X_2X_1^T]V(V^T\mathbb{E}[X_1X_1^T]V)^{-1}]) \\
    & \quad - 2Tr(\mathbb{E}[\mathbb{E}[X_2X_1^\top]V(V^\top \mathbb{E}[X_1X_1^\top] V)^{-1}V^\top X_1 X_2^\top]) + Tr(\mathbb{E}[X_2^\top X_2]) \\
    & = Tr((V^\top\mathbb{E}[X_1X_1^\top]V)(V^\top\mathbb{E}[X_1X_1^\top]V)^{-1}V^\top\mathbb{E}[X_1X_2^\top]\mathbb{E}[X_2X_1^\top]V(V^\top\mathbb{E}[X_1X_1^\top]V)^{-1}) \\
    & \quad - 2Tr(\mathbb{E}[X_2X_1^\top]V(V^\top\mathbb{E}[X_1X_1^\top]V)^{-1}V^\top\mathbb{E}[X_1X_2^\top]) + Tr(\mathbb{E}[X_2^\top X_2]) \\
    & = Tr(\mathbb{E}[X_2X_1^\top]V(V^\top\mathbb{E}[X_1X_1^\top]V)^{-1}V^\top\mathbb{E}[X_1X_2^\top]) \\
    & \quad - 2Tr(\mathbb{E}[X_2X_1^\top]V(V^\top\mathbb{E}[X_1X_1^\top]V)^{-1}V^\top\mathbb{E}[X_1X_2^\top]) + Tr(\mathbb{E}[X_2^\top X_2]) \\
    & = Tr(\mathbb{E}[X_2^\top X_2]) - Tr(\mathbb{E}[X_2X_1^\top]V(V^\top\mathbb{E}[X_1X_1^\top]V)^{-1}V^\top\mathbb{E}[X_1X_2^\top])
\end{aligned}
\end{equation}

Denote $R_{11} = \mathbb{E}[X_1X_1^\top ], R_{12} = \mathbb{E}[X_1X_2^\top ], R_{22} = \mathbb{E}[X_2X_2^\top ]$, minimizing $\mathcal{L}$ is solving the following maximization problem:
\begin{equation}
    \max_VTr(V^\top R_{12}R_{12}^\top V (V^\top R_{11}V)^{-1})
\end{equation}
which is equivalent to the following maximization problem:
\begin{equation}
    \max_{\tilde{V}: \tilde{V}^\top R_{11}\tilde{V} = I} Tr(\tilde{V}^\top R_{12}R_{12}^\top \tilde{V})
\label{eq:opt-v-tilde}
\end{equation}

And we can observe that the constraint is satisfied when:
\begin{equation}
    \tilde{V} = V(V^\top R_{11}V)^{-\frac{1}{2}}
\end{equation}
Thus, $\tilde{V}$ and $V$ share the same column space. And the subspace can be found via the optimization problem in \cref{eq:opt-v-tilde}, which yields to the generalized eigenvalue problem \cite{ghojogh2019eigenvalue}:
\begin{equation}
    R_{12}R_{12}^\top \tilde{V}=R_{11} \tilde{V}\Lambda
\label{eq:eigenval-autoregression}
\end{equation}
Since \cref{eq:opt-v-tilde} is a maximization problem, $\tilde{V}$ contains the eigenvectors of $(R_{12}R_{12}^\top, R_{11})$ that correspond to the top $k$ largest eigenvalues. And so $V$ spans the same column space as these eigenvectors.\\

From our definition we have $X_1=XL_1, X_2=XL_2$ with $L_1, L_2 \in \mathbb{R}^{(\ell, \ell-1)}$ defined as:
\begin{equation}
    L_1 = \begin{bmatrix}
        Q_{\ell-1} \\
        0
    \end{bmatrix},
\end{equation}

\begin{equation}
    L_2 = \begin{bmatrix}
        0\\
        I_{\ell-1}
    \end{bmatrix},
\end{equation}
where $Q_{l-1} \in \mathbb{R}^{l-1 \times l-1}$ is a unit upper triangular matrix (i.e. all entries on or above the diagonal are 1 and 0 below the diagonal). \\

Denote
\begin{equation}
  D \in \mathbb{R}^{\ell \times \ell}= L_1L_1^\top = \begin{bmatrix}
    Q_{\ell-1}Q_{\ell-1}^\top & 0 \\
    0 & 0
\end{bmatrix}
\end{equation}
\begin{equation}
  S \in \mathbb{R}^{\ell \times \ell} = L_1L_2^\top = \begin{bmatrix}
    0 & Q_{\ell-1} \\
    0 & 0
\end{bmatrix}
\end{equation}

Then we can rewrite the generalized eigenvalue problem in \cref{eq:eigenval-autoregression} as:
\begin{equation}
    \mathbb{E}[XSX^\top ]\mathbb{E}[XS^\top X^\top ]\tilde{V} = \mathbb{E}[XDX^\top ]\tilde{V}\Lambda
\end{equation}

Now let's consider a regression task with the label of the sentence $X$ denoted as $Y \in \mathbb{R}^c$.\\

Given a linear model to predict the label based on the sum of all tokens in the sentence that contains a linear mapping $U \in \mathbb{R}^{d \times k}$ and $Z \in \mathbb{R}^{k \times c}$. And set the learning objective as mean squared error (MSE) loss $\mathcal{L}^*$ as defined below:
\begin{equation}
    L^* = \mathbb{E} \left\| Z^\top U^\top X \, 1_{\mathrm{\ell \times 1}} - Y \right\|_F^2,
\end{equation}
where $1_{\mathrm{\ell \times 1}} \in \mathbb{R}^{\ell \times 1}$ is for summing the tokens in $X$.\\

Similarly, with the optimal $Z$, we will have the optimal $U$ sharing the same column space as $\tilde{U}$ that contains the eigenvectors corresponding to the largest k eigenvalues of the following generalized eigenvalue problem:
\begin{equation}
    \mathbb{E}[X1_{\mathrm{\ell \times 1}}Y^\top]\mathbb{E}[Y(X1_{\mathrm{\ell \times 1}})^\top]\tilde{U} = \mathbb{E}[(X1_{\mathrm{\ell \times 1}})(X1_{\mathrm{\ell \times 1}})^\top]\tilde{U}\Lambda,
\end{equation}
which simplifies to:
\begin{equation}
    \mathbb{E}[X1_{\mathrm{\ell \times 1}}Y^\top]\mathbb{E}[Y1_{\mathrm{1 \times \ell}}X^\top]\tilde{U} = \mathbb{E}[X1_{\mathrm{\ell \times \ell}}X^\top]\tilde{U}\Lambda,
\end{equation}
\end{proof}

\subsection{Proof of Theorem 2}
\AlignmentLossTheorem*
\begin{proof}
\label{app:thm2}
   Denote 
   $N:=\mathbb{E}[X1_{\mathrm{\ell \times \ell}}X^\top],\;M:=\mathbb{E}[X1_{\mathrm{\ell \times 1}}Y^\top].$

\begin{equation}
   \mathcal L^{\!*}(V)=\operatorname{Tr}(\mathbb E[YY^{\top}])\;-
   \operatorname{Tr}\!\bigl(V^{\top}MM^{\top}V\,(V^{\top}NV)^{-1}\bigr).
\end{equation}
\begin{equation}
   \mathcal L^{\!*}(U)=\operatorname{Tr}(\mathbb E[YY^{\top}])\;-
   \operatorname{Tr}\!\bigl(U^{\top}MM^{\top}U\,(U^{\top}NU)^{-1}\bigr).
\end{equation}

Recall that the columns of $U$ solve the generalized eigenvalue problem, and $U$ is $N$--orthonormal \big($U^{\top}NU=I_k$\big)
\begin{equation} \label{gep}
   MM^{\top}U\;=\;NU\Lambda, \qquad \Lambda=\operatorname{diag}(\Lambda_{11},\dots,\Lambda_{kk}),\;\Lambda_{11}\ge\dots\ge\Lambda_{kk}>0.
\end{equation}
the minimal regression loss is
\begin{equation}
   \mathcal L^{\!*}(U)=\operatorname{Tr}(\mathbb E[YY^{\top}])\;-
   \sum_{i=1}^{k}\Lambda_{ii}.
\end{equation}

Premultiplying \cref{gep} by $N^{-1/2}$ and defining the whitened basis $U^*:=N^{1/2}U$ gives the \emph{ordinary} symmetric eigenproblem. ($U$ and $U^*$ share the same subspace)
\begin{equation}\label{eq:whitened-eig}
   N^{-1/2}MM^{\top}N^{-1/2}\,U^*\;=\;U^*\,\Lambda.
\end{equation}

Introduce the whitened encoder $V^*:=N^{1/2}V\,(V^{\top}NV)^{-1/2}$ 
\begin{equation}
\begin{aligned}
   T(V):&=\operatorname{Tr}\!\bigl(V^{\top}MM^{\top}V\,(V^{\top}NV)^{-1}\bigr) \\
        &=\operatorname{Tr}\!\bigl(V^{*\top}\,N^{-1/2}MM^{\top}N^{-1/2}\,V^*\bigr) \\
        &=\operatorname{Tr}\bigl(V^{*\top}\,U^*\,\Lambda\,U^{*\top}\,V^*\bigr) \\
        &=\sum_{i=1}^{k}\Lambda_{ii}\,\| V^{*\top} u^*_i\|_2^{2}, \\
        &=\sum_{i=1}^{k}\Lambda_{ii}\, u^{*\top}_i V^* V^{*\top} u^*_i, \\
        &=\sum_{i=1}^{k}\Lambda_{ii}\, u^{\top}_i N^{1/2\top} V^* V^{*\top} N^{1/2}u_i, \\
        &=\sum_{i=1}^k \Lambda_{ii}\;u_i^{\top}\,N\,V\,(V^{\top}N\,V)^{-1}\,V^{\top}\,N\;u_i
\end{aligned}
\end{equation}

\begin{equation}
\begin{aligned}
\Delta\mathcal L
&:= \sum_{i=1}^{k}\Lambda_{ii}\,\bigl(1 - \lVert V^{*\top}u_i^*\rVert_2^2\bigr)\\
&=  \sum_{i=1}^{k}\Lambda_{ii}\,(1-u_i^{*\top}\,V^*V^{*\top}\,u_i^*) \\
&= \sum_{i=1}^{k}\Lambda_{ii}\,(1-u_i^{\top}\,N\,V\,(V^{\top}N\,V)^{-1}\,V^{\top}\,N\;u_i).
\end{aligned}
\end{equation}

Note that 
\begin{equation}
\begin{aligned}
      \text{NTPS}&=\frac{\|PU\|_F^2}{\|U\|_F^2} \\
      &= \frac{\operatorname{Tr}\!\bigl(U^{\top}P^{\top}PU\bigr)}{\|U\|_F^2} \\
      &= \frac{\operatorname{Tr}\!\bigl(U^{\top}PU\bigr)}{\|U\|_F^2} \\
      &= \frac{\sum_{i=1}^{k}u_{i}^{\top}Pu_{i}}{\|U\|_F^2} \\
      &= \frac{\sum_{i=1}^{k}u_{i}^{\top} V\,(V^\top V)^{-1} V^\top\,u_{i}}{\|U\|_F^2}.
\end{aligned}
\end{equation}

For each $i$, set
\[
  r_i = 1 - u_i^\top N V (V^\top N V)^{-1} V^\top N u_i.
\]
Writing $w = V a$ and minimizing
\[
  (u_i - V a)^\top N (u_i - V a)
    = u_i^\top N u_i - 2 a^\top V^\top N u_i + a^\top V^\top N V a
\]
over $a$ yields $a^* = (V^\top N V)^{-1}V^\top N u_i$, so
\[
  r_i = u_i^\top N u_i - u_i^\top N V (V^\top N V)^{-1} V^\top N u_i = \min_{w\in\mathrm{col}(V)}(u_i-w)^\top N(u_i-w).
\]

Since $x^\top N x\ge \lambda_{\min}(N)\|x\|^2$,
\begin{equation}
\begin{aligned}
  r_i &\ge \lambda_{\min}(N)\min_{w\in\mathrm{col}(V)}\|u_i-w\|^2, \\
        &= \lambda_{\min}(N)[\|u_i\|^2 - u_i^\top V(V^\top V)^{-1}V^\top u_i].
\end{aligned}
\end{equation}

Thus
\begin{equation}
\begin{aligned}
  \Delta\mathcal{L} &\ge \lambda_{\min}(N)\sum_i \Lambda_{ii}[\|u_i\|^2 - u_i^\top V(V^\top V)^{-1}V^\top u_i], \\
                   &\ge \lambda_{\min}(N)\Lambda_{\min}\|U\|_F^2(1-\mathrm{NTPS}), \\
\end{aligned}
\end{equation}

Similarly, $x^\top N x\le \lambda_{\max}(N)\|x\|^2$ gives
\begin{equation}
  r_i \le \lambda_{\max}(N)[\|u_i\|^2 - u_i^\top V(V^\top V)^{-1}V^\top u_i],
\end{equation}
and hence
\begin{equation}
  \Delta\mathcal{L} \le \lambda_{\max}(N)\Lambda_{\max}\|U\|_F^2(1-\mathrm{NTPS}).
\end{equation}

Combining,
\begin{equation}
  \boxed{\,
    \lambda_{\min}(N)\Lambda_{\min}\|U\|_F^2\,(1-\mathrm{NTPS})
    \le \Delta\mathcal{L}
    \le \lambda_{\max}(N)\Lambda_{\max}\|U\|_F^2\,(1-\mathrm{NTPS})
  }
\end{equation}
\end{proof}

\end{document}